\newcommand{\norm}[1]{\left\lVert#1\right\rVert}
\newcommand{\abs}[1]{\left|#1\right|}
\newcommand{\diag}[1]{\text{diag}\left(#1\right)}
\newcommand{\intint}[1]{\left\llbracket#1\right\rrbracket}
\newtheorem{proposition}{Proposition}[section]
\newcommand{\shrink}{{\bf Smallify}\xspace}
\newcommand{\swl}{{\bf SwitchLayer}\xspace}
\newcommand{\swls}{{\bf SwitchLayers}\xspace}
\title{Smallify: Learning Network Size while Training}
\author{
  Guillaume Leclerc, Manasi Vartak, Raul Castro Fernandez, Tim Kraska, Samuel Madden \\
  Computer Science and Artificial Intelligence Laboratory\\
  Massachusetts Institute of Technology\\
  Cambridge, MA 02145 \\
  \texttt\{leclerc, mvartak, raulfc, kraska, madden\}@mit.edu}
\begin{document}

\maketitle

\begin{abstract}
As neural networks become widely deployed in different applications and on
different hardware, it has become increasingly important to optimize inference
time and model size along with model accuracy.  Most current techniques
optimize model size, model accuracy and inference time in different stages,
resulting in suboptimal results and computational inefficiency.  In this work,
we propose a new technique called \shrink that optimizes all three of these
metrics at the same time.  Specifically we present a new method to
simultaneously optimize network size and model performance by neuron-level
pruning during training.  Neuron-level pruning not only produces much smaller
networks  but also produces dense weight matrices
that are amenable to efficient inference.  By applying our technique to
convolutional as well as fully connected models, we show that \shrink can
reduce network size by 35X with a 6X improvement in inference time with similar
accuracy as models found by traditional training techniques. 
\end{abstract}

\pdfoutput=1

\section{Introduction}

Neural networks are used in an increasingly wide variety of
applications on a diverse set of hardware architectures, ranging from laptops to
phones to embedded sensors. This wide variety of deployment settings means
that inference time and model size are becoming as important as  prediction
accuracy when assessing model quality.  
However, currently these three dimensions, prediction accuracy, inference time, and model size, are optimized independently, often with sub-optimal results. 

Our approach to optimize the three dimensions also stands in contrast to existing techniques, which can be categorized into two general approaches:
(1) quantization \cite{Jouppi:2017:IPA:3079856.3080246} and code compilation, techniques that can be applied  to any network, and 
(2) techniques which analyze the structure of the network and systematically prune connections or neurons~\cite{han2015deepcompression,Cun}. 
While the first category is useful, it has limited impact on the network size. The second category can reduce the model size much more but has several drawbacks: first, those techniques often
negatively impact model quality.  Second, they can also (surprisingly) negatively
impact inference time as they transform dense matrix operations into sparse
ones, which can be substantially slower to execute on GPUs which do not support
efficiently sparse linear algebra\cite{han2015deepcompression}. 
Third, these techniques generally start by optimizing a
particular architecture for prediction performance, and then, as a post-
processing step, applying compression  to generate a smaller model that meets the resource constraints of the deployment setting.  
Because the network architecture is essentially fixed during this post-processing,   model architectures that work better in small settings may be missed -- this is
especially true in large networks like many-layered CNNs,  where it is
infeasible to try explore even a small fraction of possible network
configurations.


In contrast, in this paper we present a new and surprisingly simple method to simultaneously optimize network size and model performance. The key idea is to learn the right
network size at the same time that we optimize for prediction performance. 
Our approach, called \shrink, starts with an {\it over-sized} network, and dynamically shrinks it by eliminating unimportant neurons---those that do not contribute to prediction performance---during training time. 
We achieve this, by introducing  a new layer, called \swl, which can switch neurons on and off, and is co-optimized while training the neural net.
Furthermore, the layer-based approach makes it not only easy to implement \shrink in various neural net frameworks, but also to use it as part of existing network architectures.  
\shrink has two main benefits. First, it explores the architecture of models that are both small and perform well, rather than starting with a high-performing model and making it small.  
\shrink accomplishes this goal by using a single new  hyperparameter that effectively models the target network size.  
Second, in contrast to existing neural network compression techniques~\cite{Aghasi2016,han2015deepcompression}, our approach results in models that are not only small, but where the weight matrices are dense, leading to better inference time.

In summary, our contributions are as follows: 


\noindent\textbf{1.} We propose a novel technique based on dynamically switching
on and off neurons, which allows us to optimize the network size while the
network is trained. 

\noindent\textbf{2.} We extend our technique to remove entire neurons,
 leading not only to smaller networks, but also
dense matrices, which yield improved inference times as networks shrink. Furthermore, our switching
layers used during training can be safely removed before the model is used for
inference, meaning they add no additional overhead at inference time.


\noindent\textbf{3.} We show that our technique is a relaxation of group
LASSO~\cite{Yuan2006} and prove that our problem admits many global minima.

\noindent\textbf{4.} We evaluate \shrink with both fully-connected as well as
convolutional neural networks. For \texttt{CIFAR10}, we achieve the same
accuracy as a traditionally trained network while reducing the network size by
a factor of $2.2X$. Further, while sacrificing just $1$\% of
performance, \shrink finds networks that are 35X smaller. All in all, this
leads to speedups in inference time of up to 6X.


\pdfoutput=1
\section{Related Work}


There are several lines of work related to optimizing network structure. 


\noindent\textbf{Hyperparameter optimization techniques: }
One way to optimize network architecture is to use 
hyperparameter optimization. Although many methods have been 
proposed for hyperparameter optimization, simple techniques such as randomized
search have been shown to work suprisingly well in practice~\cite{BergstraJAMESBERGSTRA2012,Snoek12}.
Alternative more advanced techniques include Bayesian techniques and/or various bandit algorithms (e.g.~\cite{li2016hyperband, jamieson2016})
Although these methods can be used to tune the size of each layer in a network,  in practice, related work presents limited experimental evidence regarding this, likely because treating each layer as a hyperparameter would lead to an excessively large search space.
In contrast, with \shrink, the size of the network can be tuned with 
a single parameter. 
Recently, methods based on reinforcement learning have been proposed (\cite{Zoph2017b,Zoph2017a}) and shown to generate very accurate
networks (NAS-Net). However as stated in \cite{Zoph2017b}, they still used the popular heuristic that doubles the number of channels every time the dimension of features is reduced without challenging it.




\noindent\textbf{Model Compression: }Model compression techniques focus on
reducing the model size \emph{after} training, in contrast to \shrink, which
reduces it \emph{while} training. 
Optimal brain damage~\cite{Cun} identifies connections in a network that are
unimportant and then prunes these connections.
DeepCompression~\cite{han2015deepcompression} takes this one step further and in
addition to pruning connections, it quantizes weights to make inference
extremely efficient.  A different vein of work such as ~\cite{romero2014fitnets,
hinton2015distilling} proposes techniques for distilling a network into a
simpler network or a different model. Because these techniques work after
training, they are orthogonal and complementary to \shrink. Further,
some of these techniques, e.g.,~\cite{Han2015,Cun}, produce sparse matrices that
are not likely to improve inference times even though they reduce network size.



\paragraph{Dynamically Sizing Networks}The techniques closest to our
proposed method are those based on group sparsity such as
~\cite{Scardapane2017,Alvarez2016a}, nuclear norm \cite{Alvarez2017a}, low-rank constraints \cite{Zhou2016}, exclusive sparsity \cite{Yoon}, and even
physics-inspired methods \cite{Wen2017}. In \cite{Wen2016}, authors look beyond
removing channels and experiment with shape and depth. In~\cite{Philipp}, the authors propose a method called Adaptive Radial-Angular Gradient Descent that adds and removes neurons on the
fly via an $l_2$ penalty. 
This approach requires a
new optimizer and takes longer to converge compared to \shrink.
\cite{Liu} is similar to \shrink in that they both scale each
channel/neuron by a scalar. Our approach is more general since it can be used with any architecture, does not depend on batch normalization layers, and in contrast to \cite{Liu} we propose some implementation details to make the framework more practical in section~\ref{sec:practice}.
Most of these methods train for sparsity and deactivate neurons at the end of the training process except \cite{Alvarez2017a} that do a single step of garbage collection at epoch 15. Our pipeline allows early detection of the least important neurons/channels and take advantage from it to speed up training.
\pdfoutput=1

\section{The Smallify Approach}
\label{sec:approach}

In this section we describe the \shrink approach. We discuss first the new \swls
which are used to deactivate neurons, followed by a description of how we adapt
the training loss function.


\subsection{Overview}

At a high-level, our approach consists of two interconnected stages.
The first one identifies neurons that do not improve the
prediction accuracy of the network and deactivates them. 
The second stage then {\it removes} neurons from the network (explicitly
shrinking weight matrices and updating optimizer state) thus leading to 
smaller networks and faster inference. 

\noindent\textbf{Deactivating Neurons On-The-Fly: }During the first stage,
\shrink applies an on/off switch to every neuron of an initially over-sized
network. We model the on/off switches by multiplying each input (or output) of
each layer by a parameter $\bm{\beta}\in\{0,1\}$. A value of $0$
will deactivate the neuron, while $1$ will let the signal go through. These
switches are part of a new layer, called the \swl; this layer applies 
to fully connected as well as convolutional layers.

Our objective is to minimize the number of ``on'' switches to reduce the model size as much
as possible while preserving prediction accuracy. This can be achieved by jointly
minimizing the training loss of the network and applying an $l_0$ norm to the 
$\bm{\beta}$ parameters of the \swl. 
Since minimizing the $l_0$ norm is an NP-Hard problem, we instead relax the constraint
to an $l_1$ norm by constraining $\bm{\beta}$ to be a real number
instead of a binary value.

\noindent\textbf{Neuron Removal: } During this stage, the neurons that are
deactivated by the switch layers are actually removed from the network,
effectively shrinking the network size. This step improves inference times.
We choose to remove neurons at training time
because we have observed that this allows the remaining active neurons to adapt
to the new network architecture and we can avoid a post-training step to prune
deactivated neurons.


Next we describe in detail the switch layer as well as and the training process
for \shrink, and then describe the removal process in
Section~\ref{neuron_killing}.

\subsection{The Switch Layer}

Let $L$ be a layer in a neural network that takes an input tensor $\bm{x}$ and
produces an output tensor $y$ of shape $\left(c \times d_1 \times \dots \times
d_n\right)$ where $c$ is the number of neurons in that layer.  For instance, for
fully connected layers, $n$=0 and the output is single dimensional vector of size $c$
(ignoring batch size for now) while for a 2-D convolutional layer, $n$=2 and $c$
is the number of output channels or feature maps.

We want to tune the size of $L$ by applying a \swl, $S$, containing $c$
switches.  The \swl is parametrized by a vector $\bm{\beta} \in \mathbb{R}^c$ such that the result
of applying $S$ to $L(\bm{x})$ is a also a tensor size $\left(c \times d_1
\times \dots \times d_n\right)$ such that: 
\begin{equation} 
S_{\beta}(L(\bm{x}))_{i,...} = \bm{\beta}_iL(\bm{x})_{i, ...} \forall i \in [1\ldots c]
\end{equation}
Once passed through the switch layer, each output channel $i$
produced by $L$ is scaled by the corresponding $\bm{\beta}_i$. Note that when
$\bm{\beta}_i = 0$, the $i^{\text{th}}$ channel is multiplied by zero and will not
contribute to any computation after the switch layer. If this happens, we say
the switch layer has {\it deactivated} the neuron corresponding to channel $i$ of layer $L$.

We place \swl after each layer whose size we wish to tune; these are
typically fully connected and convolutional layers. We discuss next how to train
\shrink.

\subsection{Training \shrink} 


For training, we need to account for the effect of the \swls on the loss
function. The effect of \swls can be expressed in terms of a sparsity constraint
that pushes values in the $\bm{\beta}$ vector to 0. In this way, given a neural network
parameterized by weights $\bm{\theta}$ and switch layer parameters $\bm{\theta}$, we
optimize \shrink loss as: 
\begin{equation}
\small
  L_{SN}(\bm{x},\bm{y};\bm{\theta}, \bm{\beta}) = L(\bm{x}, \bm{y}; \bm{\beta}) +
  \lambda\norm{\beta}_1 + \lambda_2\norm{\bm{\theta}}_p^p
\end{equation}
This expression augments the regular training loss with a regularization
term for the switch parameters and another on the network weights.

Interestingly, there exists a connection between \shrink and group sparsity regularization (LASSO) which we will discuss in the following subsection.

\pdfoutput=1

\subsection{Relation to Group Sparsity (LASSO)} 


\shrink removes neurons,
i.e., inputs/outputs of layers. For a fully connected layer defined as:
\begin{equation} \label{fully_connected}
  f_{\bm{A}, \bm{b}}(\bm{x})=a(\bm{Ax + b})
\end{equation}
where $\bm{A}$ represents the connections and $\bm{b}$ the bias,
removing an input neuron $j$ is equivalent to having $\left(\bm{A}^T\right)_j =
\bm{0}$. Removing an output neuron $i$ is the same as setting $\bm{A}_i = \bm{0}$
and $\bm{b}_i = 0$. Solving optimization problems while trying to set entire
group of parameters to zero is the goal of group sparsity regularization
\cite{Scardapane2017}. 
In  any partitioning of the set of parameters $\bm{\theta}$ defining a model in $p$
groups: $\bm{\theta} = \bigcup_{i=1}^P \bm{\theta}_i$, group sparsity 
penalty is defined as: 
\vspace{-0.1in}
\begin{equation}
    \label{full_def}
  \Omega_\lambda^{gp} = \lambda \sum_{i=1}^p \sqrt{\mathbf{card}(\bm{\theta_i}}) \norm{\bm{\theta_i}}_2 \\
\end{equation}
\vspace{-0.2in}

with $\lambda$ being the regularization parameter.
In fully-connected layers, the groups are either columns of
$\bm{A}$ if we want to remove inputs, or rows of $\bm{A}$ and the corresponding
entry in $\bm{b}$ if we want to remove outputs. For simplicity, we focus
our analysis on the simple one-layer case. As filtering outputs does
not make sense in this case,  we only consider removing inputs. The
group sparsity regularization then becomes (when $\sqrt{n}$ is folded into the $\lambda$)
\vspace{-0.1in}
\begin{equation} \label{group_sparsity_regularization}
  \Omega_\lambda^{gp} = \lambda \sum_{j=1}^p \norm{\bm{\left(A^T\right)_j}}_2 \\
\end{equation}
\vspace{-0.2in}


Interestingly, group sparsity and \shrink try to achieve the same goal and are closely related. First let's recall the two problems. In the
context of approximating $\bm{y}$ with a linear regression from features
$\bm{x}$, the two problems are:%
\vspace{-0.25in}
\begin{multicols}{2}
    \begin{equation*}
        \text{\shrink: } \min_{\bm{A}, \bm{\beta}} \norm{\bm{y} - \bm{A}\diag{\bm{\beta}}\bm{x}}_2^2 + \lambda \norm{\bm{\beta}}_1
    \end{equation*}
    \break
    \begin{equation*}
        \text{\textbf{Group sparsity: }}\min_{\bm{A}} \norm{\bm{y} - \bm{A}\bm{x}}_2^2 + \Omega_\lambda^{gp}
    \end{equation*}
\end{multicols}
\vspace{-0.1in}

We can prove that under the condition: $\forall j\in \intint{1, p},
\norm{\left(\bm{A}^T\right)_j}_2 = 1$ the two problems are equivalent by taking
$\bm{\beta}_j = \norm{\left(\bm{A}^T\right)_j}_2^2$, and replacing $\bm{A}$ by
$\bm{A}\left(\diag{\bm{\beta}}\right)^{-1}$. However, if we relax this constraint
then \shrink becomes non-convex and has no global minimum. The latter is true
because one can divide $\bm{\beta}$ by an arbitrarly large constant and
multipliying $A$ by the same value. Fortunately, by adding an extra term to the \shrink
regularization term we can avoid that problem and prove that:
\begin{equation}
  \min_{\bm{A}, \bm{\beta}} \norm{\bm{y} - \bm{A}\diag{\bm{\beta}}\bm{x}}_2^2 + \Omega_\lambda^s + \lambda_2\norm{A}_p^p
\end{equation}
has global minimums for all $p>0$. More specifically there are at least $2^k$,
where $k$ is the total number of components in $\bm{\beta}$. Indeed, for any
solution, one can obtain the same output by flipping any sign in $\bm{\beta}$
and the corresponding entries in $\bm{A}$.  This is the reason we defined the
regularized \shrink penalty above in \cref{full_def}.
In practice, we observed that $p=2$ or $p=1$ are good a choice; note that the latter
will also introduce additional sparsity into the parameters because the $l_1$ is, 
thest best convex approximation of the $l_0$ norm.

\pdfoutput=1

\section{Smallify in Practice}
\label{sec:practice}

In this section we discuss practical aspects of \shrink{}, including 
neuron removal  and several optimizations.

\noindent\textbf{On-The-Fly Neuron Removal.}
\label{neuron_killing}
Switch layers are initialized with weights sampled from $\mathcal{N}(0,1)$;
their values change as part of the training process so as to switch \emph{on}
or \emph{off} neurons. Using gradient descent, it is very unlikely that the unimportant
components of $\bm{\beta}$ will ever be exactly $0$. In most cases, irrelevant neurons will see their \swl oscillate close to 0, while never reaching 0, influenced solely by the $L_1$ penalty. Our goal is
to detect this situation and effectively force them to $0$ to deactivate them. We
evaluated multiple screening strategies but the most efficient and flexible one
was the \textbf{Sign variance strategy}: At each update we measure the sign of
each component of $\bm{\beta}$ ($-1$ or $1$). We maintain two metrics: the
exponential moving average (EMA) of its mean and variance. 
When the variance exceeds a predefined threshold, we assume that the neuron does not contribute significantly to the output, so we effectively deactivate it. 
This strategy is parametrized
by two hyper-parameters, the threshold but also the momentum of the statistics we keep.

\noindent\textbf{Preparing for Inference.} With \shrink we obtain
reduced-sized networks during training, which is the first steps towards faster
inference. This networks are readily available for inference. However, because
they include switch layers---and therefore more parameters---they introduce
unnecessary overhead at inference time. To avoid this overhead, we reduce the
network parameters by combining each switch layer with its respective network
layer by multiplying the respective parameters before emitting the final trained
network. As a result, the final network is a dense network without any switching layers. 

\noindent\textbf{Neural Garbage Collection.} \shrink decides on-the-fly which
neurons to deactivate. Since \shrink deactivate a large fraction of neurons, we must
dynamically remove these neurons at runtime to not unnecessarily impact network
training time. We implemented a neural garbage collection method as part of our
library which takes care of updating the necessary network layers as well as
updating optimizer state to reflect the neuron removal.

\pdfoutput=1
\section{Evaluation}
\label{sec:eval}

The goal of our evaluation is to explore (1) whether, by varying $\lambda$,
\shrink can efficiently explore (in terms of number of training runs)  the
spectrum of high-accuracy models from small to large, on both CNNs and fully
connected networks.  Our results show that, for each network size, we obtain
models that perform as well or better than \textit{Static Networks}, trained via
traditional hyperparameter optimization;  (2) whether, because these  smaller
networks are dense, they result in improved inference times on both CPUs and
GPUs; and (3) whether the \shrink approach results in network architectures that
are substantially different than the best network architectures (in terms of
relative number of neurons per layer) identified in the literature.

We implemented \swls and
the associated training procedure as a library in
pytorch~\cite{paszke2017automatic}. The layer can be freely mixed with other
popular layers such as convolutional layers, batchnorm layers, fully connected
layers, and used with all the traditional optimizers. We use our implementation
to evaluate \shrink throughout the evaluation section.

\subsection{Can \shrink achieve good accuracy?}

To answer this question we compare \shrink with a traditional network. In both
cases, we need to perform hyperparameter optimization to explore different
network sizes. We perform random search, which is an effective technique
for this purpose \cite{BergstraJAMESBERGSTRA2012}. We evaluate \shrink on two
architectures. One for which it is not possible to explore the entire
space of network architectures (\texttt{VGG}) and one for which it is
possible to do so (3 layers perceptron).

We assume no prior knowledge on the optimal batch
size, learning rate, $\lambda$ or weight decay ($\lambda_2$). Instead, we
trained a number of models, randomly and independently selecting the values of
these parameters from a range of values commonly used in practice. Training is done using the
\textit{Adam} optimizer \cite{Kingma2015a}. We start with
randomly sampled learning rate; we divide the learning rate by $10$ every $5$
consecutive epochs without improvement. We stop when the learning rate is under
$10^{-7}$.  We pick the epoch with the best validation accuracy after the size
of network converged and report the corresponding testing accuracy. We also
measure the total size, in terms of number of floating point parameters,
excluding the \swls because as described in \cref{neuron_killing}, these are
eliminated after training.

\subsubsection{Large Network Setting: \texttt{CIFAR10}}

\texttt{CIFAR10} is an image classification dataset containing $60000$ color
images $(3 \times 32 \times 32)$, belonging to $10$ different classes. We use it
with the \texttt{VGG16} network \cite{Srivastava2014}. We applied \shrink to the
VGG16 network by adding \swls after each \textit{BatchNorm}  and each fully
connected layer (except for the last layer). Recall that \shrink assume that the starting
size of the network is an upper bound on the optimal size. Thus, we started with
a network with 2x the original size for each layer.

As the baseline we use a fixed-sized network, which architecture is configured by a total of 13~parameters for the
convolutional layers and $2$ for the fully connected layers. \shrink
effectively fuse all these parameters in a single $\lambda$. 
However, for traditional conventional architectures where all of these parameters are free, it is infeasible to obtain a reasonable sample for such a large search space. 
To obtain a baseline, we therefore use the same conventional heuristic that the original VGG architecture and many other CNNs use, which doubles the number of channels   after every \textit{MaxPool} layer.
For \textit{Static Networks} we sample the size between $0.1$ and $2$ times the size
original one, designed for ImageNet. We report the same numbers as we did for
\shrink and we compare the two distributions.

The results are shown in the top figure of \cref{figure_CIFAR10}, with blue dots
indicating models produced by \shrink and orange dots indicating static
networks.  model, we plot its accuracy and model size. The lines show the Pareto
frontier of models in each of the two optimization settings. \shrink explore the
trade-off between model size and accuracy more effectively. Note that the best
performing \shrink model has $92.07\%$ accuracy which is identical to the
accuracy of the static network, while the \shrink model is $2.22$ times smaller.
In addition, if we give up just 1\% error, \shrink find a model that is 35.5
times smaller than any static network that performs as good.

\subsubsection{Small Network Setting: \texttt{COVERTYPE}}

The \texttt{COVERTYPE} \cite{Blackard:1998:CNN:928509} dataset contains $581012$
descriptions of geographical area (elevation, inclination, etc...) and the goal
is to predict the type of forest growing in each area. We picked this dataset
for two reasons. First it is simple, such that we can reach good accuracy with
only a few fully-connected layers. This is important because we want to show
that \shrink find sizes as good as \textit{Static Networks}, even if
we are sampling the entire space of possible network sizes. Second, Scardapane
et al~\cite{Scardapane2017} perform their evaluation on this dataset, which
allows us to compare the results obtained by our method with the method in
~\cite{Scardapane2017}. We compare \shrink against the same architecture
used in \cite{Scardapane2017}, i.e., a three fully-connected layers network with no
\textit{Dropout} \cite{Srivastava2014} and no \textit{BatchNorm}. 
In this case, for the \textit{Static Networks}, we independently sample the
sizes of the three different layers to explore all possible architectures. 

The results are shown in the top figure of \cref{figure_COVER}.  Here, {\it Static}
method finds models that perform well at a variety of sizes, because it is able
to explore the entire parameter space.  This is as expected;  the fact that
\shrink perform as well as the Static indicates that \shrink are doing an
effective job of exploring the parameter space using just the single $\lambda$
parameter.  Note that the best performing \shrink models has $96.91\%$ accuracy
while the best static model is only $96.66\%$ accurate, while the \shrink shrink
model is $2.51$ times smaller. In addition, if we give up just 0.5\% error,
\shrink find a model that is 38.6X smaller than any static network with
equivalent accuracy.

%

\begin{figure}
\centering
\begin{minipage}{.49\textwidth}
  \centering
  \includegraphics[width=\textwidth]{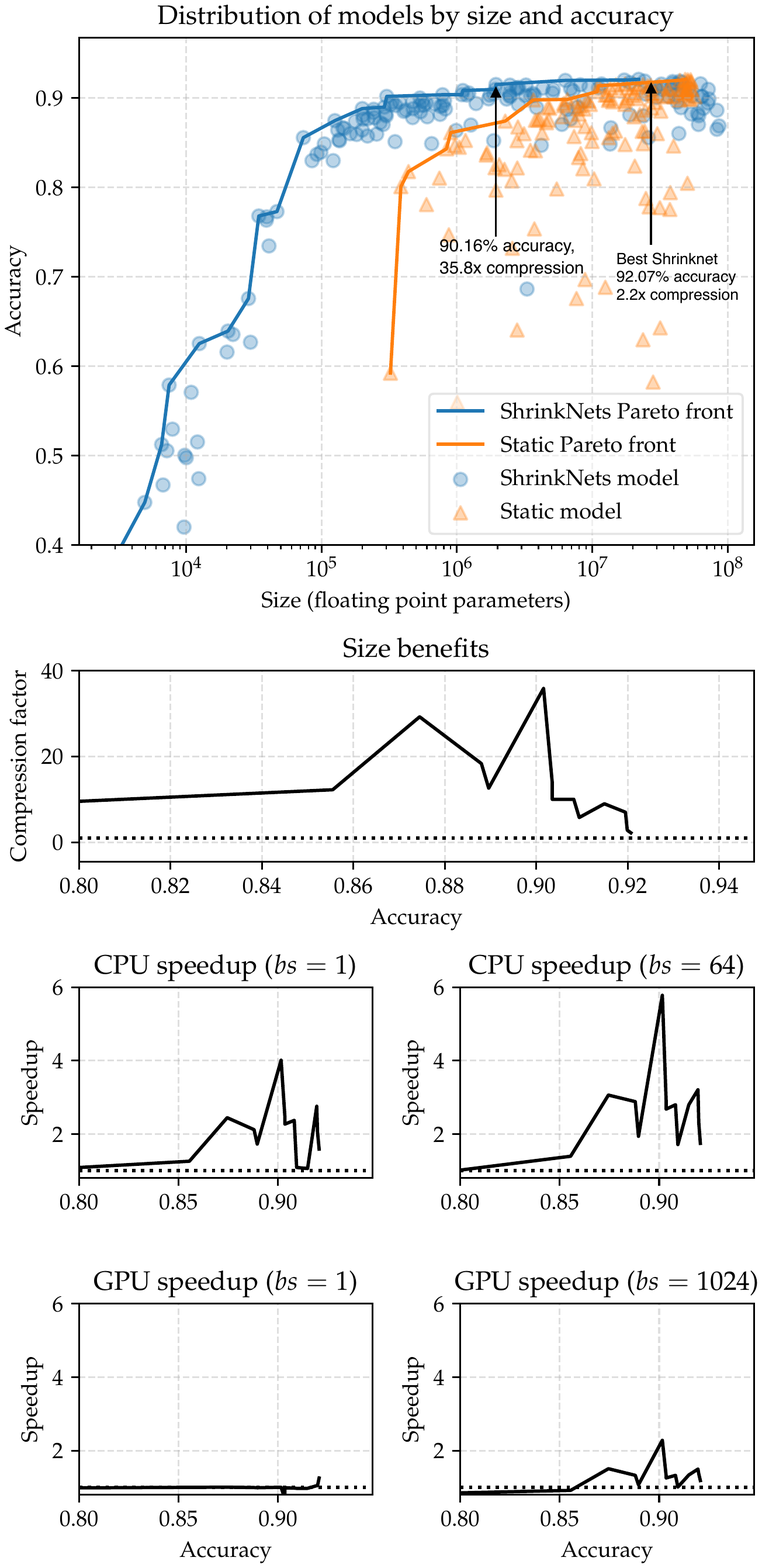}
  \captionof{figure}{
\label{figure_CIFAR10} Summary of the result of random
search over the hyper-parameters the \texttt{CIFAR10} dataset}
  \label{fig:test1}
\end{minipage}%
\hspace*{0.02\textwidth}\begin{minipage}{.49\textwidth}
  \centering
  \includegraphics[width=\textwidth]{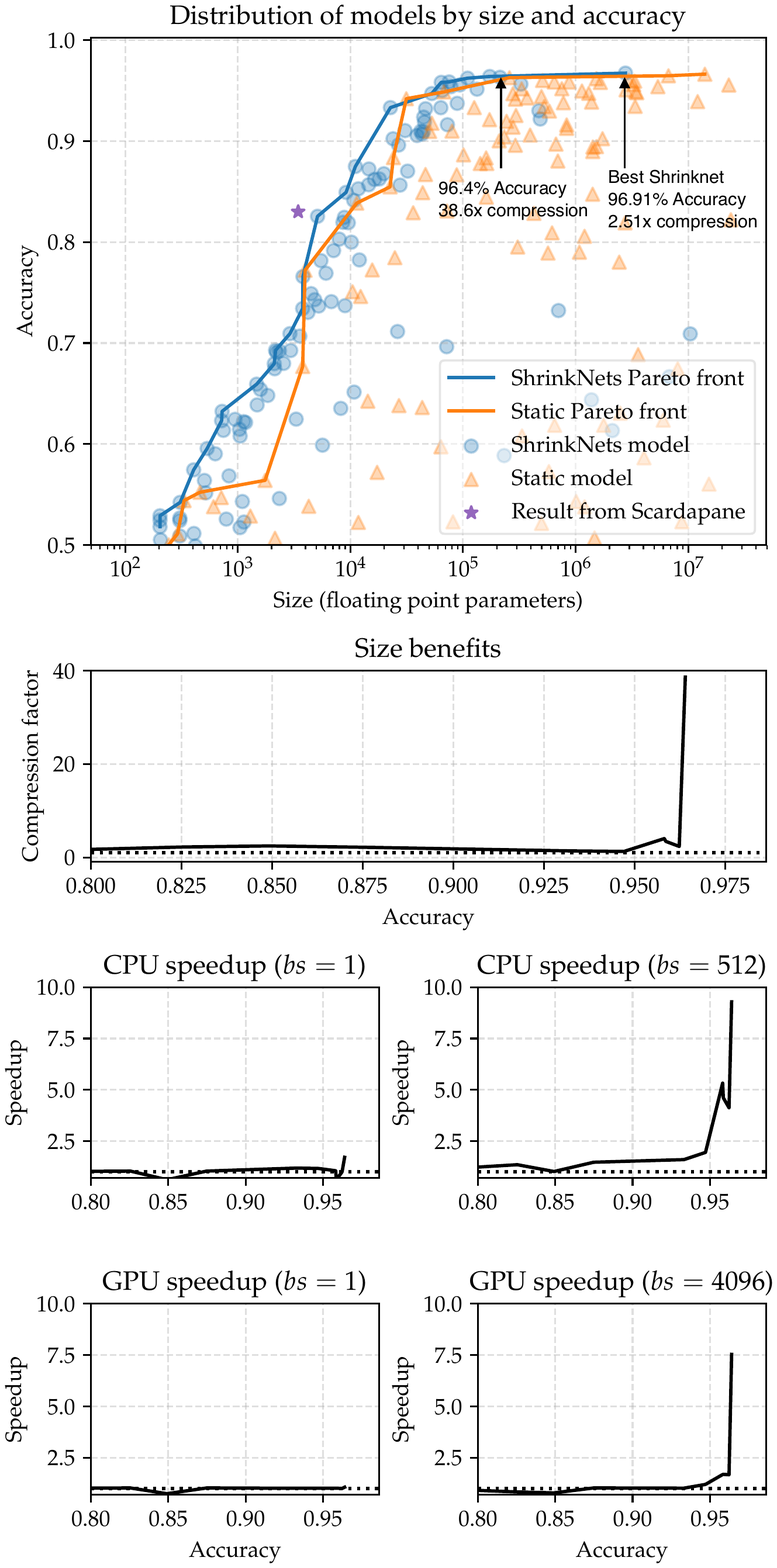}
  \captionof{figure}{
      \label{figure_COVER} Summary of the result of random
search over the hyper-parameters the \texttt{COVERTYPE} dataset
}
\end{minipage}
\end{figure}

\subsection{Can Smallify speed up inference?}

The previous experiment showed that \shrink find networks of similar or better
accuracy than static networks that are much smaller. As noted in the
introduction, for some applications, compact models that offer fast inference
times are as important as absolute accuracy.
In this section, we study the relationship between accuracy, network size and
inference time.  To do this, we select the smallest model that achieves a given
accuracy for the both \shrink and Static approach.  For each model, we measure
the time to run inference with the model.  We then compute the ratio of the
network size and inference time between \shrink and Static at each accuracy
level, and plot them on the bottom of Figure~\ref{figure_CIFAR10}
and~\ref{figure_COVER}.  We limit our plots to the models with $80-100\%$
accuracy range because those are the ones that we consider to be practically
useful.

The middle plot in each figure shows the ratio of model size between \shrink
and Static (values $>$1 mean \shrink are smaller) at different accuracy levels.
These figures show that is that size improvements are are particularly
significant for  \texttt{CIFAR10}. In the range of accuracies we are interested
in, improvements in size go from 4x to 40x. The fact
that the  \texttt{COVERTYPE} networks are not dramatically smaller is expected:
as the distribution at the top of Figure~\ref{figure_COVER} shows, the static
method is able to explore most of the parameter search space.

For speedup, we experimented with both CPUs and GPUs. For each data set/GPU/CPU
combination, we show results with batch size 1, as well as with a batch size
large enough to fully utilize the hardware on each dataset and hardware
configuration. Note that when using a batch size of $1$ on GPU, we do not expect
to (and do not) observe any improvement because inference times are very small
(typically about 10 $\mu s$), such that setup time dominates overall runtime.

The bottom four graphs in each figure show the results.  Again, the {\tt
CIFAR10} results show the benefit of the \shrink approach most dramatically.  On
CPU, speedups range up to 6x depending on the batch size, with many models
exceeding 3x speedup. In general, speedups are less than compression ratios, due
to overheads in problem setup, invocation, and result generation  in
Python/PyTorch.  On GPU, the speedups are less substantial because the CUDA
benchmarking utility that we use for evaluation can choose better algorithms for
larger matrices which masks some of our benefit, although they are still often
1.5x--2x faster for large batch sizes.

A key takeaway of these speedup results is that, unlike local sparsity
compression methods, our methods' improvement on size translates directly to
higher throughput at inference time~\cite{Han2015}.

\subsection{Architectures obtained after convergence}

\shrink effectively explore the frontier of model size and accuracy. For a
given target accuracy, the size needed is significantly smaller than when we use the
"channel doubling" heuristic commonly used to size convolutional neural networks.
This suggests that this conventional heuristic may not in fact be optimal,
especially when looking for smaller models.  Empirically we observed this to
often be the case.  For example, during our experimentations on the
\texttt{MNIST} \cite{Lecun1998} and \texttt{FashionMNIST} \cite{Xiao2017}
datasets (not reported here due to space constraints), we observed that even
though these datasets have the same number of classes, input features, and
output distributions, for a fixed $\lambda$ \shrink converged to
considerably bigger networks in the case of \texttt{FashionMNIST}. This evidence
shows that optimal architecture not only depends on the output distribution or
shape of the data but actually reflects the dataset.  This makes sense, as
\texttt{MNIST} is a much easier problem than \texttt{FashionMNIST}.

To illustrate this point on a larger dataset, we show two examples of
architectures learned by \shrink in Figure~\ref{fig:network_size_evolution}. 
In the plot, the dashed line
shows the number of neurons in each layer of the original VGG net, and the
shaded regions show the size of the \shrink as it converges (with the darkest
region representing the fully converged network).  Observe that the final
network that is trained looks quite different in the two cases, with the optimal
performing network appearing similar to the original VGG net, whereas the
shrunken network allocates many fewer neurons to the middle layers, and then
additional neurons to the final fewer layers.

\begin{figure}[htb]
\begin{center}
\vspace{-.1in}
\includegraphics[width=1\columnwidth]{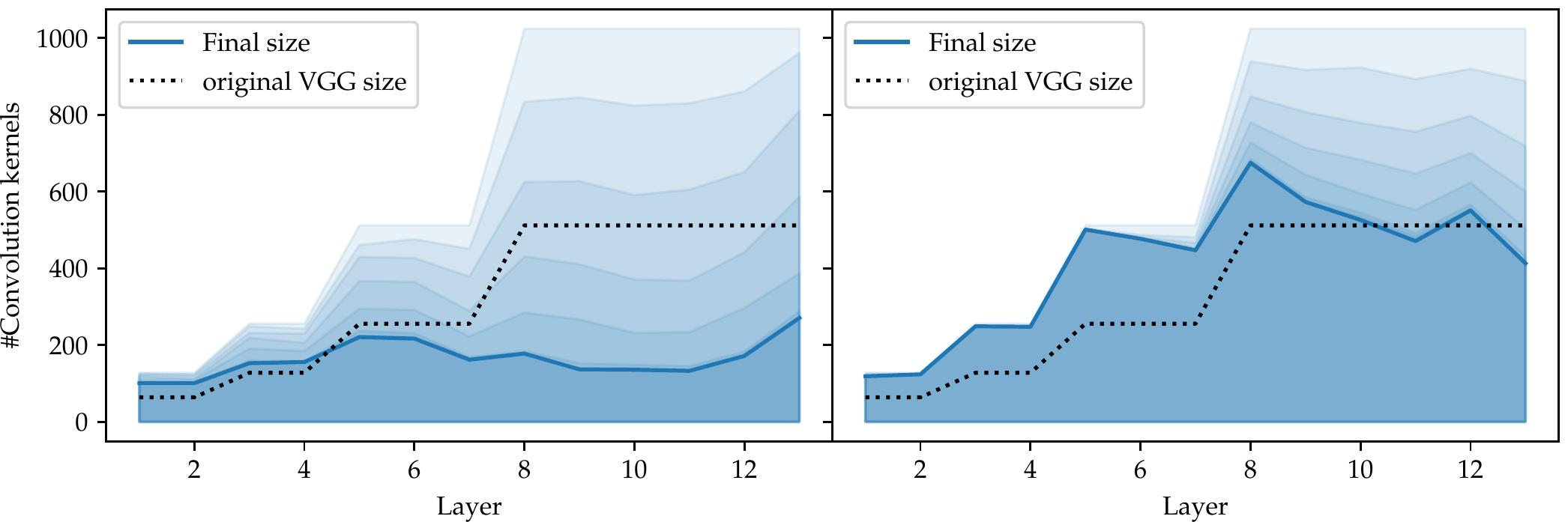}
\vspace*{-5mm} 
\caption{ Evolution of the size of
  each layer over time (lighter: beginning, darker: end). On the left a simpler model with $90.5\%$
  accuracy, on the right a very large
  network performing $92.07\%$.
} 
\label{fig:network_size_evolution}
\end{center}
\vspace*{-4mm}
\end{figure}
\pdfoutput=1
\section{Conclusion}

We presented \shrink, an approach to learn deep network sizes while training.
\shrink employs a \swl, which deactivates neurons, as well as
of a method to remove them, which reduces network sizes, leading to faster
inference times. We demonstrated these claims on on two well-known datasets, on
which we achieved networks of the same accuracy as traditional neural
networks, but up to 35X smaller, with inference speedups of up
to 6X.

\FloatBarrier
\pdfoutput=1
\section{Appendix}

%

\begin{proposition}
\label{gps_equivalence}
  $\forall (n, p) \in \mathbb{N}_+^2, \bm{y} \in \mathbb{R}^{n}, \bm{x} \in \mathbb{R}^{p} \lambda \in \mathbb{R}$:
\vspace{-.05in}
  \begin{equation*}
    \min_{\bf{A}} \norm{\bf{y} - \bf{A}\bf{x}}_2^2 + \lambda \sum_{j=1}^p \norm{\left(A^T\right)_j}_2 \\
     = \begin{cases}
         \min_{\bf{A'}, \bm{\beta}} \norm{\bm{y} - \bm{A'}\diag{\bm{\beta}}\bf{x}}_2^2 + \lambda \norm{\bm{\beta}}_1 \\
          \text{s.t.} \forall j, 1 \leq j \leq p, \norm{\left(A'^T\right)_j}_2^2 = 1
        \end{cases}
  \end{equation*}
\end{proposition}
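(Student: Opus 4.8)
The plan is to show that the two optimal values coincide by exhibiting a pair of value-preserving correspondences between the feasible sets: for every feasible point of one problem I construct a feasible point of the other with identical objective value, and carrying this out in both directions forces the two minima to be equal. Throughout I write $\left(A^T\right)_j$ for the $j$-th column of $\bm{A}$, so that $\bm{A}\bm{x} = \sum_{j=1}^p \left(A^T\right)_j \bm{x}_j$ and the two problems differ only in how these columns are scaled.

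First I would show that the right-hand minimum is at most the left-hand one. Given an arbitrary $\bm{A}$, I set $\bm{\beta}_j = \norm{\left(A^T\right)_j}_2$ and, when $\bm{\beta}_j \neq 0$, define the $j$-th column of $\bm{A'}$ by normalizing, $\left(A'^T\right)_j = \left(A^T\right)_j / \bm{\beta}_j$. Each such column has unit $\ell_2$ norm, so the constraint is met, and column by column $\bm{A'}\diag{\bm{\beta}} = \bm{A}$; hence the residuals $\norm{\bm{y} - \bm{A'}\diag{\bm{\beta}}\bm{x}}_2^2$ and $\norm{\bm{y} - \bm{A}\bm{x}}_2^2$ coincide. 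The penalties match as well, since $\lambda\norm{\bm{\beta}}_1 = \lambda\sum_{j=1}^p \norm{\left(A^T\right)_j}_2$. Thus $(\bm{A'},\bm{\beta})$ is feasible with the same objective value, giving the desired inequality.

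For the reverse inequality I would start from any feasible $(\bm{A'},\bm{\beta})$ and set $\bm{A} = \bm{A'}\diag{\bm{\beta}}$. The residual terms agree by construction, and the one computation that drives the whole argument is that the unit-norm constraint $\norm{\left(A'^T\right)_j}_2 = 1$ yields $\norm{\left(A^T\right)_j}_2 = \abs{\bm{\beta}_j}\,\norm{\left(A'^T\right)_j}_2 = \abs{\bm{\beta}_j}$, so the group-sparsity penalty collapses back to $\lambda\norm{\bm{\beta}}_1$. Combining the two inequalities gives equality of the minima. Both optima are in fact attained, since each objective is coercive in its scaling variables while the remaining direction lives on a product of unit spheres, but only the value matching is logically needed.

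The only step requiring genuine care is the degenerate case $\bm{\beta}_j = 0$, equivalently a zero column of $\bm{A}$, where the normalization $\left(A^T\right)_j/\bm{\beta}_j$ in the forward direction is undefined. I would dispose of it by choosing $\left(A'^T\right)_j$ to be any fixed unit vector: it is multiplied by $\bm{\beta}_j = 0$ and so contributes nothing to $\bm{A'}\diag{\bm{\beta}}\bm{x}$ and nothing to either penalty, so both feasibility and the objective value are preserved. A minor point worth flagging, though it does not affect the argument, is the sign freedom $\bm{\beta}_j = \pm\norm{\left(A^T\right)_j}_2$ with a compensating sign flip in $\left(A'^T\right)_j$; this is precisely the source of the multiplicity of minimizers discussed in the main text.
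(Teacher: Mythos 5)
Your proposal is correct and follows essentially the same route as the paper's own proof: a bidirectional, value-preserving correspondence between feasible points of the two problems, normalizing columns in one direction and absorbing $\bm{\beta}$ into $\bm{A}'$ in the other. In fact yours is slightly more careful than the paper's version, which sets $\bm{\beta}_j = \norm{\left(\bm{A}^T\right)_j}_2^2$ (the square, which does not actually yield unit-norm columns of $\bm{A}'$ --- the unsquared norm you use is the right choice) and silently assumes $\diag{\bm{\beta}}$ is invertible, whereas you handle the zero-column case explicitly.
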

\vspace{-.2in}
\begin{proof}
  First, we prove that there is at least one global minimum. Then, we 
    how to construct $2^k$ distinct solutions from a single global
  minimum.
  In order to prove this second statement, we first show that for any solution $\bm{A}$ to the first problem, there exists a solution in the second with the exact same value, and vice-versa.
  \vspace{-0.15in}
  \paragraph{Part 1} Assume we have a potential solution $\bm{A}$ for the first problem.  We define $\bm{\beta}$ such that $\bm{\beta}_j = \norm{\left(\bm{A}^T\right)_j}_2^2$, and $\bm{A}' = \bm{A}\left(\diag{\bm{\beta}}\right)^{-1}$. It is easy to see that the constraint on $\bm{A}'$ is satisfied by construction. Now:
  \vspace{-0.10in}
  \begin{equation*}
    \begin{aligned}
     \norm{\bf{y} - \bf{A}\bf{x}}_2^2 + \lambda \sum_{j=1}^p \norm{\left(A^T\right)_j}_2 
    = \norm{\bf{y} - \bf{A'}\diag{\bm{\beta}}\bf{x}}_2^2 + \lambda \sum_{j=1}^p \norm{\left(A'^T\right)_j\beta_j}_2 \\
     = \norm{\bf{y} - \bf{A'}\diag{\bm{\beta}}\bf{x}}_2^2 + \lambda \sum_{j=1}^p \abs{\beta_j} \cdot 1
    = \norm{\bf{y} - \bf{A'}\diag{\bm{\beta}}\bf{x}}_2^2 + \lambda \norm{\bm{\beta}}_1
\end{aligned}
  \end{equation*}
  \vspace{-0.25in}
  \paragraph{Part 2} Assuming we take an $\bm{A}'$ that satisfies the constraint and a $\bm{\beta}$, we can define $\bm{A} = \bm{A'}\diag{\bm{\beta}}$. We can apply the same operations in reverse order and obtain an instance of the first problem with the same value.
  \vspace{-0.15in}
  \paragraph{Conclusion} There is no way these two problems have different minima,
  because we are able to construct a solution to a problem from the solution
  of the other while preserving the value of the objective.
\end{proof}

\begin{proposition} \label{unconstrained_non_convex}
$     \norm{\bf{y} - \bf{A}\diag{\bm{\beta}}\bf{x}}_2^2
$ is not convex in $\bm{A}$ and $\bm{\beta}$.
\begin{proof}
  To prove this we will take the simplest instance of the problem: where everything is a scalar. We have $f(a, \beta) = \left(y - a\beta x\right)^2$. For simplicty we's take $y = 0$ and $x > 0$. If we consider two candidates $s_1 = (0, 2)$ and $s_2 = (2, 0)$, we have $f(s_1) = f(s_2) = 0$. However $f(\frac{2}{2}, \frac{2}{2}) = x > \frac{1}{2} f(0, 2) + \frac{1}{2}f(2, 0)$, which break the convexity property. Since we showed that a particular case of the problem is non-convex then necessarily the general case cannot be convex.
\end{proof}
\end{proposition}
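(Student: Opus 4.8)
The plan is to disprove convexity by producing an explicit counterexample rather than attempting any structural argument. Recall that a function of several variables is convex only if its restriction to every affine line is convex; equivalently, convexity is inherited by all lower-dimensional slices obtained by fixing some coordinates. It therefore suffices to locate a single low-dimensional instance of the objective on which convexity fails, and I would exploit this by collapsing to the smallest nontrivial case.

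First I would reduce everything to scalars, taking $n = p = 1$ so that $\bm{A}$, $\bm{\beta}$, $\bm{x}$, $\bm{y}$ become real numbers $a, \beta, x, y$. The objective then reads $f(a,\beta) = (y - a\beta x)^2$. The essential feature driving non-convexity is the \emph{bilinear} product $a\beta$: squaring a bilinear interaction generically destroys convexity, and the scalar reduction makes this transparent while avoiding any notational overhead from the matrix case.

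Next I would specialize the constants to sharpen the counterexample, choosing $y = 0$ and any $x > 0$, which gives $f(a,\beta) = (a\beta x)^2$. I would then pick the two points $s_1 = (0, 2)$ and $s_2 = (2, 0)$, both of which annihilate the product and hence satisfy $f(s_1) = f(s_2) = 0$. Evaluating at their midpoint $\tfrac{1}{2}(s_1 + s_2) = (1,1)$ yields $f(1,1) = x^2 > 0$. Since convexity would force $f\left(\tfrac{1}{2}(s_1+s_2)\right) \le \tfrac{1}{2} f(s_1) + \tfrac{1}{2} f(s_2) = 0$, the strict inequality $x^2 > 0$ exhibits the desired violation of Jensen's inequality.

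Finally I would conclude by invoking the restriction principle: because this scalar function arises as a slice of the full matrix-valued objective (fix all but one coordinate of $\bm{A}$ and of $\bm{\beta}$, with a single input feature active), non-convexity of the slice immediately forces non-convexity of the general function. There is essentially no genuine obstacle here — the computation is elementary — and the only point demanding a sentence of care is precisely this last reduction, namely the observation that failure of convexity on a restriction is enough, so that a single scalar counterexample settles the multivariate claim.
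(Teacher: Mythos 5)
Your proof is correct and follows essentially the same route as the paper: reduce to the scalar case with $y=0$, $x>0$, and exhibit the points $(0,2)$ and $(2,0)$ whose midpoint $(1,1)$ violates Jensen's inequality. You even compute the midpoint value correctly as $x^2$ (the paper writes $x$, a minor typo), and your explicit appeal to the restriction-to-slices principle makes the final reduction cleaner than the paper's one-line remark.
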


\begin{proposition}
\label{unconstrained_shrinknet_no_min}
$     \min_{\bf{A}, \bm{\beta}} \norm{\bf{y} - \bf{A}\diag{\bm{\beta}}\bf{x}}_2^2 + \lambda \norm{\bm{\beta}}_1
$
has no solution if $\lambda > 0$.
\end{proposition}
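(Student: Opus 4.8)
The plan is to exploit the scaling invariance of the data-fit term that was already flagged in the preceding discussion: $\norm{\bf{y} - \bf{A}\diag{\bm{\beta}}\bf{x}}_2^2$ depends on $\bf{A}$ and $\bm{\beta}$ only through the product $\bf{M} := \bf{A}\diag{\bm{\beta}}$, so for every $t>0$ the map $(\bf{A},\bm{\beta}) \mapsto (t\bf{A}, \tfrac{1}{t}\bm{\beta})$ leaves the fit term unchanged while rescaling the penalty to $\tfrac{\lambda}{t}\norm{\bm{\beta}}_1$. I would first record the nondegeneracy assumption $\bf{x}\neq\bm{0}$ and $\bf{y}\neq\bm{0}$ (if $\bf{x}=\bm{0}$ or $\bf{y}=\bm{0}$ the penalty simply forces $\bm{\beta}=\bm{0}$ and a trivial minimizer exists, so the claim is understood in the generic regime). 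Then I argue by contradiction: suppose a global minimizer $(\bf{A}^\ast,\bm{\beta}^\ast)$ with value $v^\ast$ exists, and split on whether $\bm{\beta}^\ast$ vanishes.

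In the first case, $\bm{\beta}^\ast\neq\bm{0}$, I apply the scaling with any $t>1$. The fit term is preserved, but $\tfrac{\lambda}{t}\norm{\bm{\beta}^\ast}_1 < \lambda\norm{\bm{\beta}^\ast}_1$ strictly, so the rescaled point has value below $v^\ast$, contradicting minimality. This is the easy half and is essentially the ``divide $\bm{\beta}$ by an arbitrarily large constant'' remark made before the proposition.

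In the second case, $\bm{\beta}^\ast=\bm{0}$, the scaling trick degenerates, so I instead exhibit an explicit competitor. Here $\bf{A}^\ast\diag{\bm{\beta}^\ast}\bf{x}=\bm{0}$ and the penalty is $0$, giving $v^\ast = \norm{\bf{y}}_2^2$. Since $\bf{x}\neq\bm{0}$, the matrix $\bf{M}_0 := \bf{y}\bf{x}^\top / \norm{\bf{x}}_2^2$ satisfies $\bf{M}_0\bf{x}=\bf{y}$, i.e.\ it fits exactly. Taking $\bm{\beta}=\varepsilon\bm{1}$ and $\bf{A}=\tfrac{1}{\varepsilon}\bf{M}_0$ for small $\varepsilon>0$ realizes $\bf{A}\diag{\bm{\beta}}=\bf{M}_0$, so the fit term is $0$ and the objective equals $\lambda p\varepsilon$. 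Choosing $\varepsilon$ small enough that $\lambda p\varepsilon < \norm{\bf{y}}_2^2$ beats $v^\ast$, the contradiction. Together the two cases rule out any minimizer; the same construction incidentally shows the infimum equals $0$, approached only as $\bm{\beta}\to\bm{0}$ with $\bf{A}$ blowing up, hence never attained.

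The main obstacle is not the scaling step, which settles $\bm{\beta}^\ast\neq\bm{0}$ immediately, but the corner case $\bm{\beta}^\ast=\bm{0}$: there one cannot rescale and must produce a strictly better feasible point by hand, which is exactly where the exact-fit construction $\bf{M}_0=\bf{y}\bf{x}^\top/\norm{\bf{x}}_2^2$ and the nondegeneracy hypotheses $\bf{x},\bf{y}\neq\bm{0}$ are indispensable. I would therefore state those hypotheses explicitly at the top of the proof so that the ``no solution'' conclusion is unambiguous.
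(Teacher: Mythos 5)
Your core move --- rescaling $(\bm{A}^*,\bm{\beta}^*)$ to $(t\bm{A}^*,\tfrac{1}{t}\bm{\beta}^*)$ to strictly decrease the penalty while preserving the fit --- is exactly the paper's argument (the paper uses $t=2$). Where you go beyond the paper is in noticing that this scaling is vacuous when $\bm{\beta}^*=\bm{0}$, a case the paper's one-line proof silently skips: if $\bm{\beta}^*=\bm{0}$ then $\lambda\norm{\tfrac{1}{2}\bm{\beta}^*}_1 = \lambda\norm{\bm{\beta}^*}_1 = 0$ and no contradiction is obtained. You handle that case with the explicit exact-fit competitor $\bm{M}_0=\bm{y}\bm{x}^\top/\norm{\bm{x}}_2^2$ factored as $(\tfrac{1}{\varepsilon}\bm{M}_0,\varepsilon\bm{1})$, which drives the objective below $\norm{\bm{y}}_2^2$. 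You are also right that the nondegeneracy hypotheses are not cosmetic: as stated, the proposition is false when $\bm{y}=\bm{0}$ (or $\bm{x}=\bm{0}$), since then $\bm{\beta}=\bm{0}$ attains the global minimum value $0$ (resp.\ $\norm{\bm{y}}_2^2$). So your proof is correct and strictly more complete than the paper's; the observation that the infimum is $0$ and is approached only as $\bm{\beta}\to\bm{0}$ with $\bm{A}$ blowing up is a nice alternative one-step way to conclude non-attainment under your hypotheses.
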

\begin{proof}
  Let's assume this problem has a minimum $\bm{A}^*, \bm{\beta}^*$. Let's consider $2\bm{A}^*, \frac{1}{2}\bm{\beta}^*$. Trivially the first component of the sum is identical for the two solutions, however $\lambda\norm{\frac{1}{2}\bm{\beta}} < \lambda\norm{\bm{\beta}}$. Therefore $\bm{A}^*, \bm{\beta}^*$ cannot be the minimum. We conclude that this problem has no solution.
\end{proof}
\begin{proposition}
  \label{shrinknet_regularized_minimum}
For this proposition we will not restrict ourselves to single layer but the composition of an an arbitrary large ($n$) layers as defined individually as $f_{\bm{A}_i, \bm{\beta}_i, \bm{b}_i}(x) = a(\bm{A_i}\diag{\bm{\beta_i}}\bm{x} + \bm{b_i})$. 
Suppose the entire network is denoted by the function $N(\bm{x})$.
For $\lambda > 0$, $\lambda_2 > 0$ and $p > 0$ we have that $\min \norm{\bm{y} - N(\bm{x})}_2^2 + \Omega_{\lambda, \lambda_2, p}^{rs}$ has at least $2^k$ global minimum where $k = \sum_{i=1}^n {\bf card}(\bm{\beta_i})$
\end{proposition}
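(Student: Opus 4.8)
The plan is to follow the two-part template of the preceding propositions: first establish that a global minimizer exists, and then exhibit a sign-flip symmetry that turns any one minimizer into $2^k$ of them. Write $\Theta = (\bm{A}_i, \bm{\beta}_i, \bm{b}_i)_{i=1}^n$ for the full parameter tuple and $F(\Theta) = \norm{\bm{y} - N(\bm{x})}_2^2 + \Omega_{\lambda,\lambda_2,p}^{rs}$ for the objective, with $\Omega_{\lambda,\lambda_2,p}^{rs} = \sum_{i=1}^n\left(\lambda\norm{\bm{\beta}_i}_1 + \lambda_2\norm{\bm{A}_i}_p^p\right)$. The conceptual heart is that the extra term $\lambda_2\norm{\bm{A}_i}_p^p$ is exactly what repairs the pathology of \cref{unconstrained_shrinknet_no_min}: there the infimum was not attained because the scaling $(\bm{A},\bm{\beta})\mapsto(c\bm{A}, c^{-1}\bm{\beta})$ leaves $\bm{A}\diag{\bm{\beta}}$ (hence the data term) fixed while driving $\lambda\norm{\bm{\beta}}_1$ to $0$; once $\lambda_2\norm{\bm{A}}_p^p$ is present this escape direction is penalized, since $\norm{c\bm{A}}_p^p = c^p\norm{\bm{A}}_p^p \to \infty$.

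For existence I would argue by coercivity and Weierstrass. The function $F$ is continuous whenever the activation $a$ is (it is a composition of sums, products, and $a$). Fix a reference point, e.g. the all-zeros tuple $\Theta_0$, with finite value $F_0 = F(\Theta_0)$, and restrict attention to the sublevel set $S = \{\Theta : F(\Theta) \le F_0\}$. Since the data term is nonnegative, every $\Theta \in S$ satisfies $\sum_i(\lambda\norm{\bm{\beta}_i}_1 + \lambda_2\norm{\bm{A}_i}_p^p) \le F_0$, and because $\lambda,\lambda_2 > 0$ this confines each $\bm{\beta}_i$ and each $\bm{A}_i$ to a bounded set (for any $p>0$ the set $\{\norm{\bm{A}_i}_p^p \le c\}$ is bounded, as each entry obeys $\abs{A_{i,k,l}} \le c^{1/p}$). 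Thus $S$ is bounded in the penalized coordinates; granting that the fit term also confines the unpenalized biases — automatic in the single-layer affine case and discussed below — $\overline{S}$ is compact and the continuous $F$ attains its minimum there, which is global.

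Having one global minimizer $\Theta^\star = (\bm{A}_i^\star, \bm{\beta}_i^\star, \bm{b}_i^\star)$, I would next define, for each sign vector $s \in \{-1,+1\}^k$ indexed by the pairs $(i,j)$ with $1\le j \le \mathbf{card}(\bm{\beta}_i)$, the transformed tuple obtained by replacing $\bm{\beta}_{i,j}^\star$ with $s_{i,j}\bm{\beta}_{i,j}^\star$ and simultaneously multiplying the $j$-th column of $\bm{A}_i^\star$ by $s_{i,j}$. Because the $(k,j)$ entry of $\bm{A}_i\diag{\bm{\beta}_i}$ is $A_{i,k,j}\beta_{i,j}$ and $s_{i,j}^2 = 1$, each effective matrix $\bm{A}_i\diag{\bm{\beta}_i}$ is left unchanged; hence every layer receives the same pre-activation, the composition $N$ is unchanged, and the data term is preserved. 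The penalty is preserved as well, since $\abs{s_{i,j}\beta_{i,j}} = \abs{\beta_{i,j}}$ leaves $\norm{\bm{\beta}_i}_1$ fixed and flipping the sign of a column leaves $\norm{\bm{A}_i}_p^p = \sum_{k,l}\abs{A_{i,k,l}}^p$ fixed. Hence $F$ is invariant under this $(\mathbb{Z}/2)^k$ action, the orbit of $\Theta^\star$ consists entirely of global minimizers, and there are $2^k$ sign choices.

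I expect the main obstacle to lie in the counting rather than the construction: the $2^k$ orbit elements are genuinely distinct only when the action has trivial stabilizer, i.e. when no coordinate has simultaneously $\bm{\beta}_{i,j}^\star = 0$ and a vanishing $j$-th column of $\bm{A}_i^\star$, since flipping such a doubly-zero coordinate reproduces the same tuple — and this is precisely the regime \shrink drives toward. I would handle it by observing that a flip on any coordinate whose $\bm{A}_i$ column is nonzero already produces a distinct tuple (the output and penalty arguments above never used $\bm{\beta}_{i,j}\neq 0$), and otherwise reading the bound as a count over the surviving coordinates. The remaining delicacy is the existence step's control of the biases: this is immediate in the single-layer affine case, where the data term is a coercive quadratic in $\bm{b}$, but in the fully composed network it requires an assumption on $a$ ensuring the fit term still grows with the unpenalized $\bm{b}_i$.
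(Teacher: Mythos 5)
Your proof follows the same two-part strategy as the paper's own (coercivity plus Weierstrass for existence, then the simultaneous sign-flip of $\bm{\beta}_{i,j}$ and the $j$-th column of $\bm{A}_i$ for multiplicity) and is correct; it is in fact more careful than the paper's argument, which declares $\Omega_{\lambda,\lambda_2,p}^{rs}$ ``trivially coercive'' without noting that the unpenalized biases $\bm{b}_i$ escape the penalty, and which never checks that the $2^k$ flipped tuples are pairwise distinct. The two caveats you raise are genuine weaknesses of the stated proposition rather than of your write-up: boundedness in the bias directions does require an assumption on the activation $a$ (for a bounded activation the sublevel sets are unbounded in $\bm{b}_i$ and the infimum need not be attained), and a flip on a coordinate with $\bm{\beta}_{i,j}=0$ and a zero $j$-th column of $\bm{A}_i$ --- precisely the configuration the $\lambda_2\norm{\bm{A}_i}_p^p$ term produces at deactivated neurons --- fixes the tuple, so the honest count is $2^{k'}$ over the surviving coordinates and can degenerate to a single minimizer (e.g.\ $\bm{y}=\bm{0}$ with no biases).
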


\begin{proof}
  We  split this proof into two parts. First we show that there is at least
  one global minimum, then we will show how to construct $2^n - 1$ other distinct
  solutions with the same objective.
\\ \textbf{Part 1:}
The two components of the expression are always positive so we know that this problem is bounded by below by $0$. $\Omega_{\lambda, \lambda_2, p}^{rs}$ is trivially coercive. Since we have a sum of terms, all bounded by below by $0$ and one of them is coercive, so the entire function admits at least one global minimum.
\\ \textbf{Part 2:} Let's consider one global minimum. For each component $k$ of $\bm{\beta_i}$ for some $i$. Negating it and negating the $k^{th}$ column of $\bm{A_i}$ does not change the the first part of the objective because the two factors cancel each other. The two norms do not change either because by definition the norm is independent of the sign. As a result these two sets of parameters have the same value and by extension also a global minimum. It is easy to see that going from this global minimum, we can decide to negate or not each element in each $\bm{\beta_i}$. We have a binary choice for each parameter, there are $k = \sum_{i=1}^n {\bf card}(\bm{\beta_i})$ parameters, so we have at least $2^k$ global minima.

\end{proof}

%
%

\bibliographystyle{unsrt}
\bibliography{custom,general}
\end{document}